\documentclass[10pt,twocolumn]{article}

\usepackage[utf8]{inputenc}
\usepackage[T1]{fontenc}
\usepackage{mathptmx}
\usepackage[margin=0.75in]{geometry}
\usepackage{microtype}
\usepackage{amsmath,amssymb,amsthm}
\usepackage{booktabs,multirow,array,makecell}
\usepackage{graphicx}
\usepackage{xcolor}
\usepackage{tikz}
\usetikzlibrary{positioning,arrows.meta,calc,fit,backgrounds}
\usepackage{enumitem}
\usepackage{caption}
\usepackage[round,authoryear]{natbib}
\definecolor{ink}{HTML}{1B211D}
\definecolor{muted}{HTML}{5A5E66}
\definecolor{pBlue}{HTML}{DCE5F7}   
\definecolor{pBlueD}{HTML}{3A5BA8}
\definecolor{pMint}{HTML}{D5F0E3}   
\definecolor{pMintD}{HTML}{1F7A55}
\definecolor{pSand}{HTML}{F4E8DB}   
\definecolor{pSandD}{HTML}{8A5A3A}
\definecolor{pRose}{HTML}{F7DDDD}   
\definecolor{pRoseD}{HTML}{B03A3A}
\definecolor{pPeach}{HTML}{FCE6D4}
\definecolor{pGrey}{HTML}{EEF0EC}
\usepackage{xurl}
\usepackage[colorlinks=true,linkcolor=pBlueD,citecolor=pBlueD,urlcolor=pBlueD]{hyperref}

\newtheorem{proposition}{Proposition}

\newcommand{\model}{LAVOIR}
\newcommand{\voi}{\mathrm{VOI}}
\newcommand{\E}{\mathbb{E}}
\newcommand{\pbar}[3]{\tikz[baseline=-0.6ex]{\fill[#1!25] (0,-0.08) rectangle (#3,0.08);\fill[#1] (0,-0.08) rectangle ({#2*#3},0.08);}}
\tikzset{
  box/.style={draw=ink, line width=0.6pt, rounded corners=1.5pt, inner sep=3pt, align=left, font=\scriptsize},
  lbl/.style={font=\tiny\ttfamily, text=muted},
  arr/.style={-{Stealth[length=4pt]}, line width=0.6pt, draw=ink},
  arrG/.style={-{Stealth[length=4pt]}, line width=0.9pt, draw=pMintD},
}
\setlist{itemsep=2pt,topsep=4pt}

\title{\textbf{\model: Teaching a Single-Pass Decision Encoder\\When and What to Ask with Amortized Value of Information}}

\author{
Furkan Y{\i}lmaz\\
\texttt{furkanyl509@gmail.com}
\and
Habibe Aleyna Ta\c{s}demir\\
\texttt{aleynattasdemir@gmail.com}
\and
Muhammed Faruk G\"{o}zay\\
\texttt{gozayfaruk@gmail.com}
}

\date{September 2026}

\begin{document}
\maketitle

\begin{abstract}
``System One'' decision models such as TypeSafe's Jev and its open counterpart Laya answer typed questions about a text in a single forward pass with calibrated probabilities, but they cannot ask for missing information: when a first message does not say what separates two departments, they guess. We present \model{} (\textbf{La}ya with \textbf{V}alue-\textbf{O}f-\textbf{I}nformation \textbf{R}outing), which places the candidate pieces of missing information (\emph{slots}) in the input next to the answer options, so that one forward pass returns both the decision distribution and, for every slot, the expected gain in the probability of the correct decision if the user were asked about it. VOI targets need no human labels: gold decisions come from schema rules, an LLM only verbalizes messages and answers, a model from another family checks every text, and pairing each message with several profiles makes regression on realized gains estimate the expected gain. A Gini-impurity cap bounds the predicted value by what a calibrated model can still gain. In a controlled study, decisions on seen schemas are statistically indistinguishable from the Bayes ceiling. The final model's question policy matches a greedy oracle VOI policy on seen schemas (AUC 0.799 vs.\ 0.797), and with at most 0.5 questions per conversation it is 14.1 points more accurate than never asking. On real ABCD conversations, one real exchange raises accuracy by 8.3 points where \model{} asks and leaves it unchanged where it does not; on SGD the cap lowers the asking rate from 93\% to 8.6\%. On Laya's twelve benchmarks \model{} is above Laya's reported scores on seven, and it answers a question in 31\,ms (median, GH200).
\end{abstract}

\section{Introduction}
\label{sec:intro}

Many production decisions over text are small, typed and frequent: which team should handle a ticket, whether an e-mail is phishing, which tool to call. LLMs can make them, but they generate text token by token and do not expose calibrated probabilities. TypeSafe introduced Jev \citep{typesafe2026jev} as the first \emph{System One} model, named after the fast mode of thinking in \citet{kahneman2011}: it returns typed decisions with probabilities from one parallel query, and its weights are not public. Laya \citep{laya2026}, released three days later as an open model with a Jev-compatible interface, places every answer option behind its own \texttt{[MASK]} marker in the input of a bidirectional encoder \citep{warner2024modernbert} and scores all options in one pass; the question schema is given at request time, so a new decision needs no retraining.

These models must decide from the text they are given, and real first messages are often underspecified. \emph{``My information was shared without permission, what can I do?''} may belong to the data-protection or to the legal team, depending on what was shared and what the customer wants. A human agent asks one question; a single-pass model guesses or escalates every uncertain case (Figure~\ref{fig:teaser}).

\begin{figure*}[t]
\centering
\begin{tikzpicture}[font=\scriptsize]
  \node[font=\small\bfseries, anchor=west] (tL) at (0,0) {Laya: decide or hand off};
  \node[box, fill=pSand, draw=pSandD, text width=7.2cm, below=3pt of tL.south west, anchor=north west] (mL)
    {\itshape ``hi, i need help with my recent purchase. i would like a replacement for the item. can you help me with that? thanks''};
  \node[box, fill=white, text width=7.2cm, below=4pt of mL] (pL) {%
    \begin{tabular}{@{}l@{\hspace{5pt}}l@{\hspace{5pt}}r@{}}
      \multicolumn{3}{@{}l}{\textcolor{muted}{$p(\text{team}\mid\text{message})$}}\\
      \texttt{refunds}             & \pbar{pBlueD}{0.357}{3.2cm} & .36 \\
      \texttt{returns\_desk}       & \pbar{pBlueD}{0.290}{3.2cm} & .29 \\
      \texttt{marketplace\_support}& \pbar{pBlueD}{0.220}{3.2cm} & .22 \\
      \texttt{warranty\_service}   & \pbar{pBlueD}{0.069}{3.2cm} & .07 \\
      \texttt{logistics}           & \pbar{pBlueD}{0.065}{3.2cm} & .06 \\
    \end{tabular}};
  \node[box, fill=pRose, draw=pRoseD, text width=7.2cm, below=4pt of pL] (hL)
    {\textbf{confidence $0.12 < 0.85$ $\Rightarrow$ hand off to a human}\\ best guess \texttt{refunds} (36\%): wrong};

  \begin{scope}[xshift=8.3cm]
  \node[font=\small\bfseries, anchor=west] (tR) at (0,0) {\model: ask, then decide};
  \node[box, fill=pSand, draw=pSandD, text width=7.2cm, below=3pt of tR.south west, anchor=north west] (mR)
    {\itshape ``hi, i need help with my recent purchase. i would like a replacement for the item. can you help me with that? thanks''};
  \node[box, fill=white, text width=7.2cm, below=4pt of mR] (vR) {%
    \begin{tabular}{@{}l@{\hspace{5pt}}l@{\hspace{5pt}}r@{}}
      \multicolumn{3}{@{}l}{\textcolor{muted}{VOI per slot (loop 0), $c=0.05$}}\\
      \texttt{seller}        & \pbar{pMintD}{0.493}{3.2cm} & .197 \\
      \texttt{problem}       & \pbar{pMintD}{0.428}{3.2cm} & .171 \\
      \texttt{delivery\_age} & \pbar{pMintD}{0.090}{3.2cm} & .036 \\
    \end{tabular}};
  \node[box, fill=pMint, draw=pMintD, text width=7.2cm, below=4pt of vR] (q1)
    {\textbf{ask \texttt{seller}:} ``Was the item sold by us directly or by a seller on our marketplace?''\\
     \textcolor{pSandD}{\itshape ``The item was sold by our store.''}\\
     \textcolor{muted}{now \texttt{returns\_desk} .45, \texttt{logistics} .43: still unsure}};
  \node[box, fill=pMint, draw=pMintD, text width=7.2cm, below=4pt of q1] (q2)
    {\textbf{ask \texttt{problem}} (VOI .385): ``What is the problem with your order?''\\
     \textcolor{pSandD}{\itshape ``The item arrived damaged.''}};
  \node[box, fill=pBlue, draw=pBlueD, text width=7.2cm, below=4pt of q2] (dR)
    {\textbf{max VOI $\le 0.05$ $\Rightarrow$ decide \texttt{logistics}} ($p=1.00$) $\checkmark$\\
     without asking it would have picked \texttt{returns\_desk} (.35): wrong};
  \end{scope}
\end{tikzpicture}
\caption{The same underspecified first message (a case of the \texttt{ecommerce\_returns} schema) given to Laya and to \model{}. Laya's confidence (one minus the normalized entropy, as Laya computes it) stays at 0.12, so its recommended usage (hand off below 0.85) sends the conversation to a human, and its best guess is wrong anyway; Laya never saw this schema in training. \model{} asks about the seller and then about the problem, skips the three slots whose answers would not change the decision, and routes to the correct team.}
\label{fig:teaser}
\end{figure*}

\model{} adds a second marker block for slots after the answer options. In the same forward pass that produces the decision distribution, a small \emph{VOI head} reads each slot marker and predicts how much asking about that slot would raise the probability of the correct decision; the model asks about the best slot if its value exceeds a question cost and otherwise decides or hands off (Figure~\ref{fig:loop}). The question \emph{content} is chosen by the model; its \emph{wording} is a template attached to the slot. The training signal needs no human labels: schema rules map complete user profiles to gold units, an LLM only verbalizes partial profiles, and because every message is paired with several profiles that differ in their hidden values, least-squares regression on the realized gain $p_k(y^\star)-p_0(y^\star)$ estimates the expected gain, an amortized value of information \citep{howard1966,rao2018}. Our contributions are:
\begin{itemize}
    \item \textbf{Amortized VOI for typed decisions}: a slot block, a zero-initialized segment embedding (with no slots the model reproduces Laya's logits bit for bit), a VOI head, and a Gini cap that bounds predicted value by the maximum expected gain of a calibrated model.
    \item \textbf{Label-free VOI targets} from rule-defined gold with an exact posterior, an answer simulator, and two-way cross-family checking, together with the schema design lessons that made generation work.
    \item \textbf{An evaluation against exact references}: the decisions match the Bayes ceiling and the question policy matches an oracle policy; on real conversations the questions go where answers help.
    \item \textbf{Findings about Laya's recipe}: its policy-gradient term brings no gain and slows the VOI head, and one temperature per question type miscalibrates heterogeneous mixtures.
\end{itemize}

\section{Related Work}
\label{sec:related}

\paragraph{System One models and decision encoders.}
Jev \citep{typesafe2026jev} returns typed answers (categories, scores, a choice among up to 255 options) with probabilities, decodes in parallel and is trained with what TypeSafe calls Reinforcement Learning for Calibrated Decisions (RLCD); it is API-only, with 236--276\,ms median latency in independent measurements \citep{jevbench_abdel,jevbench_nib}. Laya \citep{laya2026} implements this interface on ModernBERT-large \citep{warner2024modernbert} or mmBERT \citep{marone2025mmbert}, trains with strictly proper scoring rules \citep{gneiting2007} plus a group-baseline policy-gradient term \citep{williams1992,shao2024deepseekmath}, and calibrates with per-type temperatures \citep{guo2017}. GLiNER and GLiClass \citep{zaratiana2024gliner,stepanov2025gliclass} encode labels jointly with the text, and SCX Router \citep{stepanov2026scx} scores label tokens against a decoder cache for model routing \citep{ong2024routellm}. None of these models can ask for missing information.

\paragraph{Clarifying questions.}
\citet{rao2018} rank clarification questions by the expected value of perfect information, generating candidate answers. Qulac and ClariQ \citep{aliannejadi2019,aliannejadi2021} study clarification in search, \citet{yu2020interactive} ask binary questions chosen by information gain, and LLM-based work decides when to clarify \citep{kuhn2022clam,zhang2023clarify}, trains models to ask \citep{andukuri2024stargate}, or simulates answers to plan questions \citep{hu2024uot}. \model{} instead predicts the value of every candidate question in one encoder pass, without generating or simulating answers at inference.

\paragraph{Value of information, deferral and conformal prediction.}
VOI \citep{howard1966} and expected information \citep{lindley1956} are the classical criteria for choosing what to observe; deep adaptive design amortizes sequential design into a policy network \citep{foster2021dad}, and POMDP dialogue managers trade off asking and acting with simulated users \citep{williams2007pomdp,schatzmann2007}. Selective classification \citep{geifman2017} and learning to defer \citep{mozannar2020} decide whether to answer at all; KnowNo \citep{ren2023knowno} asks for help when a conformal set \citep{vovk2005,angelopoulos2021} has more than one option; our baseline B5 uses the same trigger with an uncalibrated prediction set. Because LLM-generated data carries exploitable artifacts \citep{li2023synthetic,gururangan2018,geirhos2020}, we keep labels out of the LLM and check texts with a model from another family.

\section{Problem Formulation}
\label{sec:formulation}

We follow Laya's interface: a \emph{state} $s$ (a message or JSON object) and a \emph{question} of type \textit{choice}, \textit{score} or \textit{noul} (yes/no) with options $o_1,\dots,o_m$; the model returns $p(\cdot\mid s,q)$.

\paragraph{Schemas and the exact posterior.}
A \emph{schema} has a set of units $\mathcal{U}$ (the options), decisive and non-decisive slots, a finite value set $V_k$ for every decisive slot, a conditional prior $P(v_k\mid v_{\pi(k)})$ with at most one parent slot, and an ordered rule list $g$ (the last rule is \emph{else}) that maps decisive-slot values to a unit. A \emph{profile} $\theta$ assigns every slot; its gold unit is $y^\star=g(\theta)$. What the model has seen about slot $k$ is an evidence set $E_k\subseteq V_k$: $\{\theta_k\}$ for a stated slot or a clean answer, $\{\theta_k,a\}$ for a \emph{partial} answer (``it might be $a$ or maybe $b$''), $V_k$ for ``I don't know''; an \emph{over-informative} answer also reveals one unasked slot. With a uniform likelihood inside each evidence set,
\begin{equation}
P(y\mid E)\propto\textstyle\sum_{\theta}P(\theta)\,\mathbb{1}[g(\theta)=y]\prod_k\mathbb{1}[\theta_k\in E_k],
\end{equation}
computed exactly by enumeration. The \emph{Bayes ceiling} of a test set is the mean of $\max_y P(y\mid E)$; no text-only model can exceed it in expectation.

\paragraph{Value of information.}
With $A_k(\cdot\mid\theta)$ the simulator's answer distribution for slot $k$, the \emph{oracle VOI} is the expected increase in the probability of the correct unit:
\begin{equation}
\begin{split}
\voi^\star(k\mid E)={}&\E_{\theta\sim P(\cdot\mid E)}\,\E_{a\sim A_k(\cdot\mid\theta)}\big[P(g(\theta)\mid E\oplus a)\big]\\
&-\textstyle\sum_{y}P(y\mid E)^2.
\end{split}
\label{eq:oracle}
\end{equation}
\begin{proposition}[Gini bound]
\label{prop:gini}
$\voi^\star(k\mid E)\le 1-\sum_y P(y\mid E)^2=G(P(\cdot\mid E))$, the Gini impurity \citep{breiman1984} of the posterior.
\end{proposition}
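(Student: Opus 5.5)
The plan is to bound the first term of \eqref{eq:oracle} by $1$ and then read off the claim. The subtracted term is $\sum_y P(y\mid E)^2$ in both \eqref{eq:oracle} and the statement. So the proposition reduces to
\[
\E_{\theta\sim P(\cdot\mid E)}\,\E_{a\sim A_k(\cdot\mid\theta)}\big[P(g(\theta)\mid E\oplus a)\big]\le 1 .
\]

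This inequality holds because the integrand is a probability. For every profile $\theta$ and answer $a$, the quantity $P(g(\theta)\mid E\oplus a)$ is a posterior probability of one unit. It is computed by the same enumeration formula as $P(y\mid E)$, with the evidence set $E_k$ replaced by its refinement, and with a second slot's evidence set also refined in the over-informative case. It therefore lies in $[0,1]$. The expectation of a $[0,1]$-valued quantity under the probability measures $P(\cdot\mid E)$ and $A_k(\cdot\mid\theta)$ is again at most $1$.

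Subtracting $\sum_y P(y\mid E)^2$ from both sides then gives $\voi^\star(k\mid E)\le 1-\sum_y P(y\mid E)^2=G(P(\cdot\mid E))$.

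Two technical points need checking, and I expect them to be the only real obstacle.

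\textbf{Well-definedness of $P(\cdot\mid E\oplus a)$.} The posterior after an answer must have a nonzero normalizer, so that it is genuinely a probability distribution. This holds whenever $a$ has positive probability under the simulator given some $\theta$ in the support of $P(\cdot\mid E)$. The reason is that the new evidence set always contains the true value $\theta_k$, and the revealed slot's value in the over-informative case, by the definitions of clean, partial, ``I don't know'' and over-informative answers. So the true $\theta$ itself remains consistent with $E\oplus a$, and the enumeration sum is at least $P(\theta)>0$. Pairs $(\theta,a)$ of zero probability do not affect the expectation.

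\textbf{Meaning of the subtracted term.} As a side remark, $\sum_y P(y\mid E)^2=\E_{\theta\sim P(\cdot\mid E)}[P(g(\theta)\mid E)]$ is the expected probability assigned to the correct unit before asking. This shows the bound is the natural one: perfect information can at most raise that expectation to $1$. The proof itself needs only the trivial upper bound.
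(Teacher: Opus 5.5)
Your proposal is correct and matches the paper's proof, which also just notes that the first term of Eq.~\eqref{eq:oracle} is an expected probability and therefore at most $1$. Your check that $P(\cdot\mid E\oplus a)$ is well defined and your identity $\sum_y P(y\mid E)^2=\E_{\theta\sim P(\cdot\mid E)}[P(g(\theta)\mid E)]$ are both right; they add rigor and interpretation but do not change the argument.
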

\begin{proof}
The first term of Eq.~\eqref{eq:oracle} is an expected probability, hence at most 1.
\end{proof}
For a calibrated model $p\approx P(\cdot\mid E)$, the value of any question therefore vanishes as the model becomes certain; \S\ref{sec:gini} shows that a learned head does not discover this on its own out of distribution.

\paragraph{Question policy.}
\label{sec:policy}
Given $p$ and predicted values $\hat v_k$ for the slots not yet asked, \model{} asks about $k^\star=\arg\max_k\hat v_k$ if $\hat v_{k^\star}>c$ and fewer than $Q$ questions have been asked, appends the question and answer to the state and re-encodes; otherwise it hands off if $1-\max_y p_y>c_h$ and decides $\arg\max_y p_y$ if not (Figure~\ref{fig:loop}). A slot is never asked twice.

\begin{figure*}[t]
\centering
\begin{tikzpicture}[font=\scriptsize,
  adim/.style={draw=ink, line width=0.6pt, rounded corners=1.5pt, minimum width=2.9cm, minimum height=0.95cm, align=center, fill=white},
  x=1cm, y=1cm]
  \node[adim] (msg) at (0,0)    {\textbf{Message}\\\textcolor{muted}{first message / answer added}};
  \node[adim] (mod) at (3.6,0)  {\textbf{Model}\\\textcolor{muted}{$p$ and $\hat v_k$ in one pass}};
  \node[adim, fill=pMint!60] (chk) at (7.2,0) {\textbf{$\max_k \hat v_k > c$ ?}\\\textcolor{muted}{e.g.\ \texttt{request} .366 $\to$ yes}};
  \node[adim, fill=pMint] (ask) at (10.8,0) {\textbf{Ask slot $k^\star$}\\\textcolor{muted}{fixed question}};
  \draw[arr] (msg) -- (mod);
  \draw[arr] (mod) -- (chk);
  \draw[arrG] (chk) -- node[above, font=\tiny\bfseries, text=pMintD] {yes} (ask);
  \draw[arrG] (ask.south) -- ++(0,-0.75) -| node[pos=0.35, fill=white, font=\tiny\ttfamily, inner sep=2pt] {send question $\to$ wait for answer $\to$ re-encode} (msg.south);

  \node[adim, fill=pBlue!70] (hc) at (7.2,-2.75) {\textbf{$1-\max_y p_y > c_h$ ?}\\\textcolor{muted}{still unsure?}};
  \node[adim, fill=pBlue] (dec) at (3.6,-2.75) {\textbf{Decide}\\\textcolor{muted}{$\arg\max_y p_y$}};
  \node[adim, fill=pRose] (ho) at (10.8,-2.75) {\textbf{Hand off}\\\textcolor{muted}{to a human}};
  \draw[arr, preaction={draw=white, line width=4pt}] (chk.south) -- node[right, font=\tiny\bfseries, text=pBlueD, pos=0.84] {no, or $Q$ questions asked} (hc.north);
  \draw[arr] (hc) -- node[above, font=\tiny\bfseries, text=pBlueD] {no} (dec);
  \draw[arr] (hc) -- node[above, font=\tiny\bfseries, text=pRoseD] {yes} (ho);

  \coordinate (lt) at ($(msg.north)+(0,0.3)$);
  \coordinate (lb) at ($(ask.south)+(0,-0.95)$);
  \coordinate (et) at ($(dec.north)+(0,0.3)$);
  \begin{scope}[on background layer]
    \node[fit=(lt)(msg)(ask)(lb), fill=pMint!25, draw=pMintD!60, rounded corners=2pt, inner sep=6pt] (lp) {};
    \node[fit=(et)(dec)(ho), fill=pBlue!35, draw=pBlueD!50, rounded corners=2pt, inner sep=6pt] (ep) {};
  \end{scope}
  \node[anchor=north west, font=\tiny\bfseries, text=pMintD, inner sep=3pt] at (lp.north west) {LOOP $\cdot$ RUNS INSIDE THE MODEL};
  \node[anchor=north west, font=\tiny\bfseries, text=pBlueD, inner sep=3pt] at (ep.north west) {EXIT};
\end{tikzpicture}
\caption{The asking loop of \S\ref{sec:policy}. Every forward pass returns the decision distribution $p$ and a value $\hat v_k$ for each slot not yet asked. While the best slot's value exceeds the cost $c$ and the budget $Q$ is not used up, its fixed question is sent and the answer is appended to the conversation, which is re-encoded. Otherwise the model decides, or hands the case to a human if it is still unsure. In the example of Figure~\ref{fig:teaser} the loop runs twice before exiting.}
\label{fig:loop}
\end{figure*}

\section{Model}
\label{sec:model}

\paragraph{Input.} Laya's sequence has \texttt{[CLS]}, the question type and instruction, one \texttt{[MASK]} per option, and the state. We append an optional slot block introduced by the plain-text header ``missing information:'' (Figure~\ref{fig:seq}). Option and slot orders are re-shuffled every time an example is read, and the state is always a list of turns, so the target snapshot and the deployed model see the same format.

\begin{figure}[t]
\centering
\begin{tikzpicture}[font=\scriptsize\ttfamily,
  tok/.style={draw=#1, fill=#1!18, line width=0.5pt, rounded corners=1pt, inner sep=2pt, minimum height=11pt},
  tok/.default=muted, x=1pt, y=1pt]
  \node[tok] (a1) at (0,0) {[CLS]};
  \node[tok, right=2pt of a1] (a2) {choice: Which team should handle this?};
  \node[tok, right=2pt of a2] (a3) {[SEP]};
  \node[tok=pBlueD, below=4pt of a1.south west, anchor=north west] (b1) {[MASK]};
  \node[tok, right=2pt of b1] (b2) {privacy};
  \node[tok=pBlueD, right=2pt of b2] (b3) {[MASK]};
  \node[tok, right=2pt of b3] (b4) {legal};
  \node[tok=pBlueD, right=2pt of b4] (b5) {[MASK]};
  \node[tok, right=2pt of b5] (b6) {support};
  \node[tok, right=2pt of b6] (b7) {[SEP]};
  \node[tok, below=4pt of b1.south west, anchor=north west] (c0) {missing information:};
  \node[tok=pMintD, right=2pt of c0] (c1) {[MASK]};
  \node[tok, right=2pt of c1] (c2) {data type};
  \node[tok=pMintD, right=2pt of c2] (c3) {[MASK]};
  \node[tok, right=2pt of c3] (c4) {request};
  \node[tok=pMintD, below=4pt of c0.south west, anchor=north west] (d1) {[MASK]};
  \node[tok, right=2pt of d1] (d2) {contract};
  \node[tok, right=2pt of d2] (d3) {[SEP]};
  \node[tok=pSandD, right=2pt of d3] (d4) {My information was shared\dots};
  \node[tok, right=2pt of d4] (d5) {[SEP]};
  \node[font=\scriptsize\rmfamily, below=5pt of d1.south west, anchor=north west] {\textcolor{pBlueD}{option markers $\to$ scorer $\to p$} \quad \textcolor{pMintD}{slot markers $\to$ VOI head $\to \hat v_k$}};
\end{tikzpicture}
\caption{Input sequence. The second marker block lists the slots that could be asked; option markers (blue) feed the option scorer and slot markers (green) feed the VOI head in the same forward pass. The state (sand) is the conversation so far.}
\label{fig:seq}
\end{figure}

\paragraph{Architecture.} We keep Laya's decision head on ModernBERT-large (a question-type embedding, two transformer layers and an option scorer; \citealp{laya2026}) and add three components. A \emph{segment embedding} of the block id is initialized to zero, so with an empty slot block the logits are bit-identical to Laya's (\texttt{torch.equal}, real weights, fp32). The \emph{VOI head} computes $h_k=\mathrm{MLP}([z_k;f(p)])$ (LayerNorm over $d+4$ inputs, 256 GELU units, scalar output; 0.27M parameters), where $z_k$ is the head output at slot marker $k$ and $f(p)$ holds the largest probability, the top-two margin, the entropy divided by $\log m$, and $m/255$ (the features of Laya's act head), computed on the detached distribution. The \emph{Gini cap} gives
\begin{equation}
\hat v_k = G\big(\mathrm{softmax}(z/T)\big)\cdot\sigma(h_k),\qquad G(p)=1-\textstyle\sum_y p_y^2,
\label{eq:cap}
\end{equation}
with $G$ detached, so a confident model predicts a structurally small value. Uncapped variants output $h_k$ directly.

\paragraph{Losses.} Laya's decision loss is a soft cross-entropy plus a policy-gradient term that scores noisy logits with a proper reward \citep{gneiting2007,epstein1969} and applies REINFORCE with a group-mean baseline; we expose its weight $w_{\mathrm{RL}}$ ($1$ is Laya's recipe). The VOI target of slot $k$ is
\begin{equation}
t_k = \hat p_k(y^\star)-\hat p_0(y^\star),
\label{eq:target}
\end{equation}
where $\hat p_0$ is a frozen, temperature-scaled snapshot on the example and $\hat p_k$ the same after a sampled answer to slot $k$ (a \emph{probe}) is appended. We minimize the MSE between $\hat v_k$ and $t_k$, both divided by the target standard deviation, with weight $\lambda=1$. The target uses the gold unit, not the model's own risk: a self-referential target such as the drop in $1-\max_y p_y$ rewards any answer that makes the model confident, including a misleading one. It is also bounded in $[-1,1]$, unlike a log-loss difference. Because each message has several profiles, the regression estimates the snapshot's version of Eq.~\eqref{eq:oracle}.

\paragraph{Training.} A \emph{warm} phase trains the decision model on all examples (500 frozen-encoder steps, then four epochs; learning rates $2.5\times10^{-5}$ encoder and $10^{-4}$ head, cosine, bf16, global batch 64). Temperatures are fitted with L-BFGS \citep{liu1989lbfgs} on a 5\% held-out slice. The warm snapshot computes targets, two \emph{joint} epochs train $\mathcal{L}_{\mathrm{dec}}+\lambda\mathcal{L}_{\voi}$ with encoder learning rate $5\times10^{-6}$, and targets are recomputed from that snapshot for one more joint epoch, so the head regresses on values of a decision model that has stopped moving.

\section{Data}
\label{sec:data}

\paragraph{Schemas.} We wrote twelve customer-service schemas, eight for training and four held out for zero-shot evaluation, each with five or six units, three or four decisive slots and one non-decisive slot (Appendix~\ref{app:schemas}). Before generating any text, automatic gates checked reachability, unit masses, the share of units that need two or more slots, and the mean maximum posterior and VOI share of a dry run. Three pilots and one discarded full run showed that data quality is governed by the \emph{semantic independence} of slots, not by rule complexity: overlapping slots (``an unrecognized transaction'' and ``did you authorize it'') leak each other; a \emph{not applicable} value is indistinguishable from ``no'' in text; default-like values (``technical help'') are leaked by any vague message; non-decisive dates or amounts imply decisive facts, so the final non-decisive slot is the customer's name; and the generator must be forbidden to invent specifics when the topic is hidden.

\paragraph{Cases and answers.} A \emph{case} is a partial profile (the slots stated in the first message) with two to four complete profiles consistent with it, which differ in their hidden values and, typically, in their gold unit; without this pairing the regression would learn a deterministic mapping rather than an expectation. Each profile yields a message-only example, with probability 0.7 one with a question--answer pair, and with probability 0.4 one with two, plus a probe for every slot. Answers are clean (0.675), partial (0.125), ``I don't know'' (0.10) or over-informative (0.10); probes of already known slots are always clean.

\paragraph{Generation and two-way checking.} Messages (ten styles, at most 120 words) and answers are written by Qwen3.6-35B-A3B \citep{qwen3,qwen36} with vLLM \citep{kwon2023vllm}. The prompt lists the facts to state plainly and the topics not to mention, without their values. Gemma-4-26B-A4B \citep{gemma4} returns, for every decisive slot, a constrained guess and an evidence level; a text is regenerated (up to three times) if a hidden slot is guessed with explicit evidence (\emph{leakage}) or a stated slot is not recovered (\emph{fidelity}). The final run (two GH200 nodes, 12\,min) produced 21{,}601 training examples (8$\times$500 cases), 5{,}379 seen-schema test examples and 5{,}432 zero-shot test examples. First-attempt message rejection was 1--10\% for seven training schemas and 22\% for privacy, whose ``type of data'' and ``request'' slots overlap irreducibly; answer rejection was at most 2.8\%, and partial answers were guessed correctly 46--62\% of the time (expected 50\%).

\paragraph{General single-turn data.} To keep general ability we add single-turn data in Laya's format (details in Appendix~\ref{app:data}): typed-decisions, CLINC150, MultiNLI, Yelp and SGD first turns \citep{laya2026,larson2019clinc,williams2018mnli,zhang2015charcnn,rastogi2020sgd} (19{,}000); tool choice from Nemotron post-training data \citep{nemotron2025data} (6{,}000, of which 5{,}447 train); the six sources Laya trained on with question-form yes/no items \citep{zhang2015charcnn,clark2019boolq,metsis2006enron,phishingdata,bajaj2016msmarco,ticketsdata} (19{,}437), with all 4{,}447 benchmark texts removed; and 27 open sources (38{,}200, train splits only). \emph{Controlled} models use the schema data, the first 19{,}000 and tool choice (46{,}048 examples); \emph{broad} models use everything (103{,}685).

\paragraph{Real conversations.} For out-of-distribution tests we use 918 ABCD test dialogues \citep{chen2021abcd} with ten flows as units, the first customer block as state and the agent's first reply plus the customer's second block as the real exchange, and 3{,}812 SGD first user turns \citep{rastogi2020sgd} from 20 services (2{,}766 from services unseen in training) with the service's arguments as slots, which cannot change the intent.

\section{Experimental Setup}
\label{sec:setup}

\begin{table*}[t]
\centering
\caption{Controlled study on seen and zero-shot (zs) schemas. \emph{Gap}: accuracy minus Bayes ceiling (95\% CI). $\rho$, top-1: predicted vs.\ oracle VOI. AUC: area under the accuracy--questions curve (0--2 questions). b0.5: best accuracy with $\le$0.5 questions per conversation. Upper block: group temperatures; lower: one temperature per type (Laya).}
\label{tab:main}
\resizebox{\textwidth}{!}{%
\begin{tabular}{lll ccc cc cc ccccc ccc}
\toprule
& & & \multicolumn{3}{c}{Single turn} & & & \multicolumn{2}{c}{VOI vs.\ oracle} & \multicolumn{5}{c}{AUC} & \multicolumn{3}{c}{b0.5} \\
\cmidrule(lr){4-6}\cmidrule(lr){9-10}\cmidrule(lr){11-15}\cmidrule(lr){16-18}
T & Arm & Split & Acc & Ceiling & Gap [95\% CI] & ECE & KL & $\rho$ & top-1 & B2 & B3 & B5 & \textbf{VOI} & ORACLE & B2 & B3 & \textbf{VOI} \\
\midrule
group & RL1 & seen & .653 & .659 & $-$.006 [$-$.016, .004] & .015 & .016 & .821 & .876 & .612 & .766 & .791 & \textbf{.800} & .803 & .612 & .669 & \textbf{.741} \\
group & RL0 & seen & .652 & .659 & $-$.006 [$-$.017, .004] & .013 & .012 & .831 & .923 & .611 & .763 & .791 & \textbf{.799} & .800 & .611 & .660 & \textbf{.755} \\
group & RL1 & zs & .517 & .667 & $-$.150 [$-$.162, $-$.137] & .130 & .993 & .690 & .558 & .502 & .590 & .607 & \textbf{.629} & .658 & .502 & .566 & \textbf{.612} \\
group & RL0 & zs & .512 & .667 & $-$.154 [$-$.166, $-$.142] & .099 & .747 & .730 & .745 & .492 & .574 & .595 & \textbf{.616} & .633 & .492 & .542 & \textbf{.579} \\
\midrule
single & RL1 & seen & .652 & .659 & $-$.007 & .078 & .063 & .817 & .873 & .614 & .753 & .772 & .803 & .802 & .614 & .614 & .733 \\
single & RL0 & seen & .648 & .659 & $-$.010 & .064 & .043 & .773 & .934 & .608 & .751 & .772 & .797 & .798 & .608 & .608 & .762 \\
single & RL1 & zs & .514 & .667 & $-$.152 & .071 & .492 & .682 & .554 & .499 & .581 & .580 & .622 & .653 & .499 & .499 & .609 \\
single & RL0 & zs & .522 & .667 & $-$.145 & .070 & .481 & .702 & .649 & .498 & .578 & .576 & .624 & .647 & .498 & .498 & .549 \\
\bottomrule
\end{tabular}}
\end{table*}

\paragraph{Models.} All models use ModernBERT-large and a single seed: \emph{controlled} models with $w_{\mathrm{RL}}\in\{0,1\}$ (RL0/RL1) and either one temperature per question type (Laya's procedure) or per source group; the \emph{broad} model ($w_{\mathrm{RL}}=0$, per-source temperatures, uncapped); and the final \model{}, the broad warm model with the joint phases re-run with the Gini cap. Training took 28--48\,min on 8--16 GH200 GPUs of the JUPITER booster. Laya and Jev numbers are those reported by Laya (Jev's are third-party measurements); apart from Figure~\ref{fig:teaser} we did not run Laya's checkpoints.

\paragraph{Policies.} All policies share the decision head and allow at most two questions: \textbf{B2} never asks; \textbf{B3} asks about a random unresolved slot when $\max_y p_y<\tau$; \textbf{B5} asks about the highest-VOI slot when the prediction set, the smallest set of units whose probabilities sum to at least $1-\alpha$, contains more than one unit, with $\alpha\in\{0.5,0.3,0.2,0.1,0.05,0.02,0.01\}$ swept like the other thresholds (the set is not calibrated on held-out data, so it carries no coverage guarantee); \textbf{VOI} is our policy; \textbf{ORACLE} asks about the slot with the largest exact $\voi^\star$. ORACLE is greedy (it looks one question ahead) and, like every policy, decides with the model's own decision head, so it is a strong reference rather than an upper bound, and its value differs between models. For every message-only test example we run one dialogue per profile, answering with the profile's probe, and sweep $c$, $\tau$ and $c_h$.

\paragraph{Metrics.} Accuracy, Bayes ceiling and their gap with a bootstrap 95\% interval \citep{efron1993}, ECE \citep{naeini2015} and KL to the posterior; Spearman $\rho$ and top-1 agreement between $\hat v$ and $\voi^\star$; \textbf{AUC}, the normalized area under the upper envelope of accuracy vs.\ mean questions (0--2); \textbf{b0.5}, the best accuracy with at most 0.5 questions per conversation; and the share of \emph{unnecessary} questions, those about a non-decisive slot or a slot whose value is already determined by the evidence, among all questions asked. AUC differences are given on the 0--1 scale, accuracy differences in percentage points.

\section{Results}
\label{sec:results}

\subsection{Controlled study}
\label{sec:controlled}

\paragraph{Decisions on seen schemas are Bayes-optimal.} Accuracy is 0.006 below the Bayes ceiling with an interval that contains zero (Table~\ref{tab:main}); a model exploiting leaked hints would exceed the ceiling, and one under-using the text would fall clearly below it. Two schemas looked suspicious in argmax accuracy (insurance 0.649 vs.\ a ceiling of 0.610), but comparing the probability of the gold unit with the posterior under a case-clustered bootstrap shows no excess on any schema (Appendix~\ref{app:results}); the argmax excess comes from nearly tied posteriors. Per-schema leakage tests should therefore use probabilities, not argmax accuracy.

\paragraph{The question policy matches the oracle.} With group temperatures the VOI policy reaches an AUC of 0.799--0.800 against 0.800--0.803 for ORACLE, with $\rho=0.82$--$0.83$ and 88--92\% top-1 agreement. With at most 0.5 questions per conversation it is 12.9--14.4 points more accurate than never asking (B2) and 7.2--9.5 points more accurate than asking at random when uncertain (B3). B5 uses the same VOI ranking but its AUC is 0.008--0.009 lower: deciding \emph{whether} to ask from the VOI itself beats deciding it from the size of a prediction set. At $c=0.02$ RL0 asks 1.09 questions per conversation for an accuracy of 0.889 with 1\% unnecessary questions (oracle: 1.07 for 0.890); adding $c_h=0.3$ hands off 18\% of conversations and is 0.973 accurate on the rest.

\paragraph{Zero-shot schemas.} On the four held-out schemas the VOI policy still beats every baseline (AUC 0.02--0.05 above B5; with group temperatures, b0.5 3.7--4.6 points above B3), but decisions are weak: 0.51--0.52 accuracy against a ceiling of 0.667. Trained on eight schemas, the model does not learn to read unseen rules from option descriptions.

\subsection{Real conversations and the Gini cap}
\label{sec:gini}

\begin{table}[t]
\centering
\caption{Broad model without and with the Gini cap (final \model{}). Question rates at $c=.02/.05/.1$. ABCD gains: accuracy change in points after the real first exchange in the cases where the policy (at $c=.02$) would and would not ask, with 95\% paired bootstrap intervals over cases (10{,}000 resamples; 529/389 cases for \model{}, 844/74 uncapped). ORACLE is greedy and uses each model's decision head, so it can lie below VOI.}
\label{tab:gini}
\resizebox{\columnwidth}{!}{%
\begin{tabular}{lcc}
\toprule
& Uncapped & \textbf{\model{}} \\
\midrule
SGD question rate & .93 / .87 / .69 & \textbf{.086 / .063 / .048} \\
SGD AUROC(VOI $\to$ error) & .70 & \textbf{.84} \\
SGD accuracy / ECE & .948 / .041 & .942 / .044 \\
ABCD question rate & .92 / .83 / .71 & \textbf{.58 / .49 / .41} \\
ABCD AUROC(VOI $\to$ error) & .65 & \textbf{.71} \\
ABCD accuracy / ECE & .653 / .222 & .657 / .186 \\
ABCD gain, asks & +2.7 [0.5, 5.0] & \textbf{+8.3 [4.9, 11.7]} \\
ABCD gain, does not ask & +2.7 [0.0, 6.8] & \textbf{$-$0.3 [$-$1.5, 1.0]} \\
Seen: AUC VOI / ORACLE & .795 / .797 & \textbf{.799} / .797 \\
Seen: b0.5 & .757 & .751 \\
Seen: $\rho$ / top-1 & .763 / .960 & \textbf{.851} / .937 \\
Zero-shot: $\rho$ / b0.5 & .662 / .535 & \textbf{.689 / .581} \\
\bottomrule
\end{tabular}}
\end{table}

\paragraph{Without the cap the model asks too much.} The broad model is 0.948 accurate on SGD and well calibrated (ECE 0.04), yet at $c=0.02$ it would ask in 93\% of SGD and 92\% of ABCD conversations. Among the 3{,}526 SGD cases with confidence $\ge0.99$ the median maximum VOI is 0.133, although by Proposition~\ref{prop:gini} the expected gain there is at most about 0.02. The head receives the decision distribution but has not learned the bound: out of distribution the slot representation dominates, and arguments that cannot change the intent get high value (amount 0.25, ride fare 0.18).

\paragraph{The cap fixes it structurally.} Re-running only the joint phases with Eq.~\eqref{eq:cap} leaves single-turn decisions essentially unchanged (calibration-slice accuracy 0.849 $\to$ 0.847, ABCD first-message accuracy 0.653 $\to$ 0.657) and drops the question rate on confident cases to zero (Table~\ref{tab:gini}). On SGD at $c=0.05$ the model asks in 40\% of its errors and 4\% of its correct decisions. On ABCD the uncapped model's questions were untargeted: the real exchange helped equally where it would and would not ask (+2.7 each). With the cap the whole gain falls where it asks (+8.3, 95\% CI [4.9, 11.7]) and none where it does not ($-$0.3 [$-$1.5, 1.0]); the difference between the groups is 8.6 [4.9, 12.2] points, against 0.0 [$-$4.7, 3.9] for the uncapped model, so the decision to ask now tracks the value of the answer. Over all conversations the capped model gains about 4.7 points from the exchange against 2.7 for the uncapped one: re-running the joint phases changed how the decision head reads a second turn, which our single-turn checks do not measure, so we compare the two groups within each model rather than the totals across models. On seen schemas the cap raises VOI quality ($\rho$ 0.76 $\to$ 0.85) and keeps the AUC at ORACLE level; ORACLE is slightly below VOI here (0.797 vs.\ 0.799) because it is greedy and shares the decision head. With at most 0.5 questions the final model reaches 0.751, against 0.610 for never asking and 0.663 for random questions. Zero-shot b0.5 improves by 4.6 points. What remains is calibration: on ABCD the model is wrong in 19\% of the cases where its confidence is at least 0.99, where the cap correctly stays silent but the confidence is wrong.

\subsection{External benchmarks}
\label{sec:bench}

\begin{table}[t]
\centering
\caption{Accuracy on Laya's benchmarks (seed 13, 400 cases per task). \emph{Ctrl}: controlled RL0. Bold: \model{} above Laya. Laya: best released checkpoint; Laya and Jev as reported by Laya (Jev's Banking77: 72 labels, $n=100$). $^\ast$Training split in our broad data and in Laya's; $^\dagger$related train splits only. Latencies come from different hardware (ours GH200, Laya T4, Jev through its API) and are not directly comparable.}
\label{tab:bench}
\resizebox{\columnwidth}{!}{%
\begin{tabular}{lcccc}
\toprule
Set & Ctrl & \textbf{\model{}} & Laya & Jev \\
\midrule
typed-decisions$^\ast$ & .785 & \textbf{.774} & .766 & .727 \\
Banking77 (77) & .575 & \textbf{.533} & .492 & .870 \\
MASSIVE-en (20 opt.) & .790 & \textbf{.805} & .783 & -- \\
jailbreak (ToxicChat) & .888 & \textbf{.825} & .762 & -- \\
model routing$^\dagger$ & .283 & \textbf{.754} & .659 & -- \\
toxicity (ToxicChat) & .525 & \textbf{.605} & .530 & -- \\
RAG relevance$^\ast$ & .503 & \textbf{.665} & .657 & -- \\
spam$^\ast$ & .480 & .993 & .993 & -- \\
phishing$^\ast$ & .625 & .978 & .993 & -- \\
AG News$^\ast$ & .765 & .905 & .953 & .910 \\
DAIR Emotion & .528 & .575 & .600 & .480 \\
support triage$^\ast$ & .285 & .383 & .522 & -- \\
\midrule
latency, p50 & 27\,ms & 31\,ms & 33--40\,ms & 236\,ms \\
\bottomrule
\end{tabular}}
\end{table}

We ran Laya's benchmark builder and metrics unchanged on AG News \citep{zhang2015charcnn}, DAIR Emotion \citep{saravia2018carer}, Banking77 \citep{casanueva2020banking}, support tickets \citep{ticketsdata}, Enron spam \citep{metsis2006enron}, phishing \citep{phishingdata}, ToxicChat \citep{lin2023toxicchat}, MS MARCO \citep{bajaj2016msmarco}, a model-routing task over GSM8K and MBPP items \citep{cobbe2021gsm8k,austin2021mbpp}, MASSIVE-en \citep{fitzgerald2023massive} and the typed-decisions test split. \model{} is above Laya on seven of twelve sets, tied on spam and below on four (Table~\ref{tab:bench}); Banking77, jailbreak and MASSIVE were never in our training data, and on typed-decisions it is also above Jev. Latencies were measured on different GPUs (ours GH200, Laya T4).

\paragraph{Yes/no collapse.} The controlled model beat Laya on typed-decisions, Banking77, MASSIVE and jailbreak but collapsed on every yes/no benchmark (``false'' on 398/400 spam cases, ``true'' on 399/400 RAG cases). All its yes/no training items were \emph{assertions} (``This trace requires human review.''), while benchmark instructions are \emph{questions} over dictionaries with named fields. Question-form yes/no items and the open sources removed the collapse (spam 0.480 $\to$ 0.993) and lifted model routing from 0.283 to 0.754, at a small cost in VOI rank correlation on seen schemas.

\subsection{Lessons about Laya's recipe}
\label{sec:rl}
\label{sec:temperature}

\paragraph{The policy-gradient term does not help.} In a miniature model, after the decision has converged the policy-gradient term keeps sending gradients to the shared parameters that are one to two orders of magnitude larger than the VOI gradient; with a fixed exploration scale its variance does not vanish at the optimum. With $w_{\mathrm{RL}}=0$ the VOI loss reached $10^{-4}$ in 200 steps, with $w_{\mathrm{RL}}=1$ it was still 0.06 after 1{,}000. At full scale the RL0 arm is ahead in every warm epoch, reaches the same decision accuracy (0.918 vs.\ 0.919), and learns the VOI head better (top-1 0.92 vs.\ 0.88 on seen and 0.75 vs.\ 0.56 on zero-shot schemas; Appendix~\ref{app:results}). Because Laya's reward is a differentiable function of the reported distribution, its gradient can be taken directly; policy gradients would be needed only if the reward depended on which questions were asked.

\paragraph{One temperature per type is not enough.} Laya fits one temperature per question type. On our mixture this gave $T=2.25$--$2.75$ for choice because the general data has one-hot targets, while our schemas, whose targets are soft posteriors, were already calibrated at $T=1$; the shared temperature made their KL ten times worse (0.004 $\to$ 0.04--0.06). Per-group temperatures ($T\approx1.02$ for our schemas) improve ECE on seen schemas five-fold, though not on zero-shot schemas (0.070--0.071 $\to$ 0.099--0.130; Table~\ref{tab:main}), yet the dialogue AUC barely moves: the VOI \emph{ranking} is robust to temperature, and calibration matters mainly for hand-off and, through the cap, for staying silent.

\section{Discussion and Conclusion}
\label{sec:future}
\model{} turns a single-pass decision encoder into one that knows when a question is worth asking and which one to ask, with the value of every candidate question predicted in the same forward pass as the decision. Rule-defined gold, several profiles per message, two-way cross-family checking and the Gini cap made this value learnable and trustworthy: on seen schemas the decisions are Bayes-optimal and the questions oracle-level, and on real conversations the questions go where answers help. The cap matters beyond our setting: an uncapped head can learn to output zero when the model is certain, and in distribution it does, but out of distribution it values fields that merely look important. Next steps are a Turkish model on MoganBERT-TR \citep{yilmaz2026moganbert}, more and more varied schemas to close the zero-shot gap, the ABCD training split for calibration on real dialogues, and a study with human participants.

\section*{Limitations}
\textbf{Synthetic dialogues:} answers in the controlled study come from a simulator whose kinds and rates we chose, and LLM-written text may be more cooperative than real users; the real-data experiments test single exchanges, not the full loop with people. \textbf{Zero-shot decisions} stay 15 points below the ceiling, and \textbf{calibration out of distribution} is weak (ABCD ECE 0.19). \textbf{Residual overlap} in the privacy schema caused 22\% message rejection, a mild selection effect. \textbf{Missing baselines:} a slow VOI that re-encodes every possible answer, an LLM agent, and an ablation of the self-referential target. Questions are slot \textbf{templates}; all runs use \textbf{one seed} and \textbf{English} only; Laya's numbers are taken from its repository, and the two models are trained on different mixtures.

\section*{Ethics Statement}
\model{} is a routing component: it chooses a unit, asks at most a few questions, or hands the case to a human. In sensitive domains (privacy, clinic routing, insurance) the hand-off threshold should be conservative and questions should be reviewed so that they do not request unnecessary personal information. All synthetic profiles are fictitious; the generator and checker (Qwen3.6-35B-A3B, Gemma-4-26B-A4B) are Apache-2.0 models that place no restriction on generated text. Most general sources are under permissive or share-alike licenses (Apache-2.0, MIT, CC0, CC BY, CC BY-SA), but several restrict use to non-commercial research: ANLI and the support tickets (CC BY-NC 4.0), MS MARCO, the Yelp reviews, MARC and QQP (their providers' terms); the phishing corpus is LGPL-3.0, and sources without a license on the Hugging Face Hub were used under their original distribution terms. We therefore release the weights under CC BY-NC 4.0 and do not redistribute any third-party data.

\section*{Code and Data Availability}
The code is available at \url{https://github.com/moganai/lavoir} under Apache-2.0: the model, sequence builder, losses, VOI target computation, question policy, evaluation, training and fine-tuning scripts, tests, and example workflow definitions with the data format. The final model with its fitted temperatures is available at \url{https://huggingface.co/moganai/lavoir} under CC BY-NC 4.0. The general single-turn mixture is not redistributed; it is built from the public sources listed in Appendix~\ref{app:data}.

\section*{Acknowledgments}
We acknowledge the EuroHPC Joint Undertaking for awarding this project access to the EuroHPC supercomputer JUPITER, hosted by the J\"ulich Supercomputing Centre (JSC), through the EuroHPC AI Factories Playground access call (project EHPC-AIF-2026PG01-1296). We thank Convai Innovations for releasing Laya's code, data and benchmark harness under an open license.


\appendix

\section{Schemas}
\label{app:schemas}

\begin{table*}[t]
\centering
\small
\caption{The twelve schemas. \emph{Profiles}: decisive-slot combinations with non-zero prior. \emph{Multi-slot}: prior mass of units that need at least two slots. \emph{Max post.}: mean maximum posterior of message-level evidence in the dry run. \emph{VOI$>$.05}: share of unknown slots with oracle VOI above 0.05.}
\label{tab:schemas}
\setlength{\tabcolsep}{4.5pt}
\begin{tabular}{llcccccc}
\toprule
Schema & Split & Units & Decisive/all & Profiles & Unit mass & Multi-slot & Max post. / VOI$>$.05 \\
\midrule
banking\_support & train & 5 & 4/5 & 44 & .100--.350 & 1.00 & .474 / .498 \\
ecommerce\_returns & train & 5 & 4/5 & 32 & .061--.324 & .70 & .455 / .474 \\
hr\_requests & train & 6 & 3/4 & 24 & .080--.240 & .80 & .405 / .532 \\
insurance\_claims & train & 5 & 4/5 & 32 & .109--.256 & 1.00 & .456 / .518 \\
it\_helpdesk & train & 5 & 4/5 & 80 & .130--.292 & .70 & .400 / .449 \\
privacy\_requests & train & 5 & 4/5 & 76 & .063--.440 & 1.00 & .549 / .492 \\
telecom\_support & train & 6 & 4/5 & 32 & .090--.270 & .80 & .379 / .410 \\
travel\_changes & train & 5 & 4/5 & 36 & .053--.350 & .85 & .516 / .504 \\
\midrule
saas\_support & zero-shot & 5 & 4/5 & 44 & .059--.452 & .88 & .562 / .468 \\
parcel\_delivery & zero-shot & 5 & 4/5 & 32 & .105--.341 & 1.00 & .485 / .492 \\
student\_affairs & zero-shot & 6 & 4/5 & 32 & .067--.282 & .80 & .441 / .439 \\
clinic\_routing & zero-shot & 6 & 4/5 & 32 & .050--.242 & .80 & .400 / .431 \\
\bottomrule
\end{tabular}
\end{table*}

\paragraph{Gates.} Every unit is reachable, unit masses lie in $[0.03,0.5]$, at least 5\% of the mass needs two or more slots, and a dry run of the exact posterior gives a mean maximum posterior in $[0.35,0.75]$ and a share of unknown slots with $\voi^\star>0.05$ in $[0.2,0.6]$. A validator enforces that the last rule is \emph{else} and that non-decisive slots occur in no rule; it caught a telecom slot used in no rule, fixed by adding a field-technician unit.

\paragraph{Pilot history.} In pilot 1 the partial-answer hint accuracy was 0.70--0.79 instead of 0.5; the bias was confined to probes of slots already stated in the message, which the checker read from context, so such probes are now always clean. Message leakage clustered on overlapping slot pairs (banking \emph{unrecognized transaction} / \emph{authorized = no}). Pilot 2 added \emph{not applicable} values, which the generator rendered as ``no'' and produced 1{,}293--1{,}404 answer contradictions per schema; they were removed. Pilot 3 cut message rejection to 0.32--0.35 in banking and privacy and answer rejection to 1.6--3.6\%. The first full run lost up to 36\% of cases in telecom and clinic\_routing because non-decisive dates implied decisive facts and the generator invented concrete problems when the topic was hidden; the final run replaced the non-decisive slot with the customer's name and forbade invented specifics.

\section{Training Data}
\label{app:data}

\begin{table*}[t]
\centering
\small
\setlength{\tabcolsep}{5pt}
\caption{Per-schema leakage test on seen schemas: model minus posterior probability of the gold unit, and accuracy minus ceiling, with case-clustered bootstrap 95\% intervals where the interval excludes or nearly excludes zero.}
\label{tab:leak}
\begin{tabular}{lcccc}
\toprule
& \multicolumn{2}{c}{RL0} & \multicolumn{2}{c}{RL1} \\
\cmidrule(lr){2-3}\cmidrule(lr){4-5}
Schema & $p(\text{gold})$ model$-$post. & Acc$-$ceiling & $p(\text{gold})$ model$-$post. & Acc$-$ceiling \\
\midrule
banking & $-$.020 [$-$.037, $-$.007] & $-$.054 [$-$.100, $-$.013] & $-$.014 [$-$.027, $-$.004] & $-$.038 [$-$.074, $-$.002] \\
ecommerce & +.001 [$-$.002, .003] & $-$.016 & $-$.000 & $-$.010 \\
hr & $-$.002 [$-$.005, .000] & $-$.000 & +.003 [.000, .005] & +.003 \\
insurance & $-$.005 [$-$.008, $-$.002] & +.041 [.002, .080] & $-$.000 [$-$.005, .004] & +.038 [$-$.001, .076] \\
it\_helpdesk & $-$.003 & $-$.006 & +.002 & $-$.008 \\
privacy & $-$.009 [$-$.015, $-$.005] & +.021 [$-$.015, .056] & $-$.004 & +.018 \\
telecom & $-$.004 & $-$.025 & $-$.001 & $-$.034 [$-$.067, $-$.001] \\
travel & $-$.002 & $-$.016 & $-$.003 & $-$.017 \\
\bottomrule
\end{tabular}
\end{table*}

\paragraph{General v1 (19{,}000).} Typed-decisions train split (6{,}000); CLINC150-plus with random subsets of 5--15 intents (5{,}000); MultiNLI entailment/contradiction as yes/no with the hypothesis as instruction (3{,}000); Yelp reviews as class-balanced score questions (2{,}000); SGD first user turns with the active intent among the intents of the service and two to four other services (3{,}000). Each source has 10--12 instruction wordings.

\paragraph{Tool choice (6{,}000).} One of 13 shards of Nemotron-Post-Training-Dataset-v1: the state is the user message, the options are the row's tool set with distractors, and the gold is the single tool called in the first assistant turn. We removed rows whose gold tool name appears in the user message (2{,}930, a dataset artifact), rows without a call or with several distinct calls, and set sizes outside 2--12; train and test (5{,}447 / 553) are split by tool set. LLM-written tool schemas were explored as VOI schemas but dropped: only 2 of 200 passed the gates, and an assistant does not ask the user which tool to call.

\paragraph{Laya sources (19{,}437).} AG News train (2{,}500), BoolQ (3{,}000), Enron spam train (1{,}937), phishing e-mails (2{,}000), MS MARCO v1.1 train (2{,}500) and support tickets (2{,}500), plus question-form yes/no items from CLINC, Yelp, MultiNLI and SGD (5{,}000). All 4{,}447 state texts of Laya's benchmark builder and 50 further Enron subject-line overlaps were removed.

\paragraph{Open diversity (38{,}200, 27 sources, train splits only).} Choice (17{,}000): DBpedia and Yahoo Answers \citep{zhang2015charcnn}, 20 Newsgroups \citep{lang1995}, Tweet Topic \citep{antypas2022tweettopic}, GoEmotions \citep{demszky2020goemotions}, TweetEval emotion and sentiment \citep{barbieri2020tweeteval}, MATH topic \citep{hendrycks2021math}, Dolly task type \citep{conover2023dolly}, and domain routing among GSM8K, MATH, MBPP, WritingPrompts, Natural Questions, SQL questions and Dolly \citep{cobbe2021gsm8k,austin2021mbpp,fan2018writingprompts,kwiatkowski2019nq,bmc2sql}. Yes/no (17{,}400): SST-2 \citep{socher2013sst}, IMDB \citep{maas2011imdb}, Civil Comments \citep{borkan2019civil}, Measuring Hate Speech \citep{kennedy2020hate}, TweetEval hate, offensive and irony, QQP \citep{iyer2017qqp}, PAWS \citep{zhang2019paws}, MRPC \citep{dolan2005mrpc}, SNLI \citep{bowman2015snli}, ANLI \citep{nie2020anli}, SciTail \citep{khot2018scitail}, CoLA \citep{warstadt2019cola}. Score (3{,}800): Amazon star ratings from the English part of MARC \citep{keung2020marc} (SetFit copy), toxicity level from the Civil Comments toxicity score, and hate level from the Measuring Hate Speech score. Multi-class sources use random option subsets, 30\% of negatively phrased questions have inverted targets, and states are often dictionaries with named fields. None of Laya's evaluation sets was used.

\section{Additional Results}
\label{app:results}

\paragraph{Leakage.} Table~\ref{tab:leak}: for insurance the model assigns the gold unit slightly \emph{less} probability than the posterior; the argmax excess comes from nearly tied posteriors (e.g.\ 0.44/0.56), and banking shows the same noise in the opposite direction. The examples on which the model is most confident relative to the posterior contain no hidden clue, only partial answers ambiguous between two values.

\paragraph{Policy gradient.} Table~\ref{tab:grad} shows the miniature model and Table~\ref{tab:warm} the full-scale controlled chains. At full scale the ratio between the policy-gradient and VOI gradient norms on the shared parameters was 1.5--7$\times$.

\begin{table}[!ht]
\centering
\small
\setlength{\tabcolsep}{4pt}
\caption{Miniature model ($d=128$, 100 examples, $w_{\mathrm{RL}}=1$): decision KL, VOI loss and gradient norms sent to the shared parameters.}
\label{tab:grad}
\begin{tabular}{rccccc}
\toprule
Step & KL & VOI MSE & $\lVert g_{\mathrm{RL}}\rVert$ & $\lVert g_{\mathrm{CE}}\rVert$ & $\lVert g_{\voi}\rVert$ \\
\midrule
0 & .511 & 1.730 & 0.017 & 0.001 & 11.50 \\
100 & .033 & 0.984 & 48.49 & 0.491 & 0.225 \\
300 & .011 & 0.673 & 28.71 & 0.158 & 8.384 \\
600 & .007 & 0.127 & 26.95 & 0.098 & 2.445 \\
\bottomrule
\end{tabular}
\end{table}

\begin{table}[!ht]
\centering
\small
\setlength{\tabcolsep}{4pt}
\caption{Full-scale controlled training on the calibration slice: warm phase at $T=1$, and joint phase (VOI Spearman and MSE against sampled targets; KL on our schemas at $T=1$ and after fitting one temperature per type).}
\label{tab:warm}
\begin{tabular}{lcc}
\toprule
& $w_{\mathrm{RL}}=1$ & $w_{\mathrm{RL}}=0$ \\
\midrule
warm ep.\ 1: acc / KL & .807 / .230 & .838 / .102 \\
warm ep.\ 2: acc / KL & .857 / .073 & .896 / .018 \\
warm ep.\ 3: acc / KL & .896 / .019 & .927 / .006 \\
joint: Spearman / MSE & .489 / .0108 & .527 / .0104 \\
joint: acc all / ours & .919 / .965 & .918 / .956 \\
joint: KL, $T{=}1\to$ fitted & .0061 $\to$ .0612 & .0040 $\to$ .0381 \\
\bottomrule
\end{tabular}
\end{table}

\paragraph{Temperatures.} Per-group temperatures give $T=1.015$--$1.017$ for our schemas and $T=3.78$--$4.34$ (choice), $5.52$--$7.51$ (score) and $2.04$--$4.69$ (yes/no) for the general data, whose KL drops from 0.70--0.94 to 0.28--0.31.

\paragraph{Effect of the broad data.} Broadening the data kept the dialogue behaviour on seen schemas (AUC 0.799 $\to$ 0.795, oracle 0.797; accuracy 0.652 $\to$ 0.648 against a ceiling of 0.659) but lowered VOI rank correlation (0.831 $\to$ 0.763) and zero-shot b0.5 (0.579 $\to$ 0.535), which the Gini cap restored to 0.581. Shuffling the options changes 1\% of AG News, 5\% of Emotion, 12\% of MASSIVE and 29\% of Banking77 decisions of the final model.

\section{Implementation}
\label{app:impl}
The experiments used an internal module on top of Laya's code (version 0.3.11, commit \texttt{1e28ac2}) that leaves Laya's files unchanged; the released \texttt{lavoir} package re-implements it as a standalone package with its own tests. The internal module's 149 tests cover bit-identical logits with no slots, marker positions under shuffling, the posterior against brute-force enumeration, zero VOI for known and non-decisive slots, $0\le\hat v_k\le G(p)$ under the cap, no VOI gradient in the option scorer, and the policy. Laya's act/escalate head is trained with a zero-weighted loss in its notebook, so we do not use it.


\begin{thebibliography}{99}
\footnotesize
\setlength{\itemsep}{0.5pt}

\bibitem[AbdelStark(2026)]{jevbench_abdel}
AbdelStark. 2026.
\newblock jev-benchmarks: Independent benchmarks of {TypeSafe} {Jev}.
\newblock GitHub repository, \url{https://github.com/AbdelStark/jev-benchmarks}. Accessed September 2026.

\bibitem[Almeida(2026)]{typesafe2026jev}
Diogo Almeida. 2026.
\newblock Introducing {System One} models and {Jev}.
\newblock TypeSafe AI blog, 15 September 2026, \url{https://typesafe.ai/blog/introducing-system-one-models-and-jev}; documentation: \url{https://docs.typesafe.ai/concepts/system-one}.

\bibitem[Aliannejadi et~al.(2021)]{aliannejadi2021}
Mohammad Aliannejadi, Julia Kiseleva, Aleksandr Chuklin, Jeff Dalton, and Mikhail Burtsev. 2021.
\newblock Building and evaluating open-domain dialogue corpora with clarifying questions.
\newblock In \emph{Proceedings of EMNLP 2021}.

\bibitem[Aliannejadi et~al.(2019)]{aliannejadi2019}
Mohammad Aliannejadi, Hamed Zamani, Fabio Crestani, and W.~Bruce Croft. 2019.
\newblock Asking clarifying questions in open-domain information-seeking conversations.
\newblock In \emph{Proceedings of SIGIR 2019}, pages 475--484.

\bibitem[Andukuri et~al.(2024)]{andukuri2024stargate}
Chinmaya Andukuri, Jan-Philipp Fr\"anken, Tobias Gerstenberg, and Noah~D. Goodman. 2024.
\newblock {STaR-GATE}: Teaching language models to ask clarifying questions.
\newblock In \emph{Conference on Language Modeling (COLM)}. arXiv:2403.19154.

\bibitem[Angelopoulos and Bates(2021)]{angelopoulos2021}
Anastasios~N. Angelopoulos and Stephen Bates. 2021.
\newblock A gentle introduction to conformal prediction and distribution-free uncertainty quantification.
\newblock arXiv:2107.07511.

\bibitem[Antypas et~al.(2022)]{antypas2022tweettopic}
Dimosthenis Antypas, Asahi Ushio, Jose Camacho-Collados, Vitor Silva, Leonardo Neves, and Francesco Barbieri. 2022.
\newblock Twitter topic classification.
\newblock In \emph{Proceedings of COLING 2022}.

\bibitem[Austin et~al.(2021)]{austin2021mbpp}
Jacob Austin, Augustus Odena, Maxwell Nye, Maarten Bosma, Henryk Michalewski, David Dohan, Ellen Jiang, Carrie Cai, Michael Terry, Quoc Le, and Charles Sutton. 2021.
\newblock Program synthesis with large language models.
\newblock arXiv:2108.07732.

\bibitem[b-mc2(2023)]{bmc2sql}
b-mc2. 2023.
\newblock sql-create-context.
\newblock Hugging Face dataset \texttt{b-mc2/sql-create-context} (CC BY 4.0), built from WikiSQL and Spider.

\bibitem[Bajaj et~al.(2016)]{bajaj2016msmarco}
Payal Bajaj, Daniel Campos, Nick Craswell, Li~Deng, Jianfeng Gao, Xiaodong Liu, Rangan Majumder, Andrew McNamara, Bhaskar Mitra, Tri Nguyen, et~al. 2016.
\newblock {MS MARCO}: A human generated machine reading comprehension dataset.
\newblock arXiv:1611.09268.

\bibitem[Barbieri et~al.(2020)]{barbieri2020tweeteval}
Francesco Barbieri, Jose Camacho-Collados, Luis Espinosa~Anke, and Leonardo Neves. 2020.
\newblock {TweetEval}: Unified benchmark and comparative evaluation for tweet classification.
\newblock In \emph{Findings of EMNLP 2020}.

\bibitem[Borkan et~al.(2019)]{borkan2019civil}
Daniel Borkan, Lucas Dixon, Jeffrey Sorensen, Nithum Thain, and Lucy Vasserman. 2019.
\newblock Nuanced metrics for measuring unintended bias with real data for text classification.
\newblock In \emph{Companion Proceedings of The Web Conference (WWW) 2019}, pages 491--500.

\bibitem[Bowman et~al.(2015)]{bowman2015snli}
Samuel~R. Bowman, Gabor Angeli, Christopher Potts, and Christopher~D. Manning. 2015.
\newblock A large annotated corpus for learning natural language inference.
\newblock In \emph{Proceedings of EMNLP 2015}, pages 632--642.

\bibitem[Breiman et~al.(1984)]{breiman1984}
Leo Breiman, Jerome~H. Friedman, Richard~A. Olshen, and Charles~J. Stone. 1984.
\newblock \emph{Classification and Regression Trees}.
\newblock Wadsworth.

\bibitem[Casanueva et~al.(2020)]{casanueva2020banking}
I\~nigo Casanueva, Tadas Tem\v{c}inas, Daniela Gerz, Matthew Henderson, and Ivan Vuli\'c. 2020.
\newblock Efficient intent detection with dual sentence encoders.
\newblock In \emph{Proceedings of the 2nd Workshop on NLP for Conversational AI}, pages 38--45.

\bibitem[Chen et~al.(2021)]{chen2021abcd}
Derek Chen, Howard Chen, Yi~Yang, Alexander Lin, and Zhou Yu. 2021.
\newblock Action-based conversations dataset: A corpus for building more in-depth task-oriented dialogue systems.
\newblock In \emph{Proceedings of NAACL-HLT 2021}.

\bibitem[Clark et~al.(2019)]{clark2019boolq}
Christopher Clark, Kenton Lee, Ming-Wei Chang, Tom Kwiatkowski, Michael Collins, and Kristina Toutanova. 2019.
\newblock {BoolQ}: Exploring the surprising difficulty of natural yes/no questions.
\newblock In \emph{Proceedings of NAACL-HLT 2019}, pages 2924--2936.

\bibitem[Cobbe et~al.(2021)]{cobbe2021gsm8k}
Karl Cobbe, Vineet Kosaraju, Mohammad Bavarian, Mark Chen, Heewoo Jun, Lukasz Kaiser, Matthias Plappert, Jerry Tworek, Jacob Hilton, Reiichiro Nakano, Christopher Hesse, and John Schulman. 2021.
\newblock Training verifiers to solve math word problems.
\newblock arXiv:2110.14168.

\bibitem[Conover et~al.(2023)]{conover2023dolly}
Mike Conover, Matt Hayes, Ankit Mathur, Jianwei Xie, Jun Wan, Sam Shah, Ali Ghodsi, Patrick Wendell, Matei Zaharia, and Reynold Xin. 2023.
\newblock Free {Dolly}: Introducing the world's first truly open instruction-tuned {LLM}.
\newblock Databricks blog, \url{https://www.databricks.com/blog/2023/04/12/dolly-first-open-commercially-viable-instruction-tuned-llm}.

\bibitem[{Convai Innovations}(2026)]{laya2026}
{Convai Innovations}. 2026.
\newblock Laya: Multilingual non-autoregressive {System~1} decision model.
\newblock Software and model card, version 0.3.11 (commit \texttt{1e28ac2}), \url{https://huggingface.co/convaiinnovations/laya}; code: \url{https://github.com/NandhaKishorM/laya}. Accessed September 2026.

\bibitem[Demszky et~al.(2020)]{demszky2020goemotions}
Dorottya Demszky, Dana Movshovitz-Attias, Jeongwoo Ko, Alan Cowen, Gaurav Nemade, and Sujith Ravi. 2020.
\newblock {GoEmotions}: A dataset of fine-grained emotions.
\newblock In \emph{Proceedings of ACL 2020}, pages 4040--4054.

\bibitem[Dolan and Brockett(2005)]{dolan2005mrpc}
William~B. Dolan and Chris Brockett. 2005.
\newblock Automatically constructing a corpus of sentential paraphrases.
\newblock In \emph{Proceedings of the Third International Workshop on Paraphrasing (IWP 2005)}.

\bibitem[Efron and Tibshirani(1993)]{efron1993}
Bradley Efron and Robert~J. Tibshirani. 1993.
\newblock \emph{An Introduction to the Bootstrap}.
\newblock Chapman \& Hall.

\bibitem[Epstein(1969)]{epstein1969}
Edward~S. Epstein. 1969.
\newblock A scoring system for probability forecasts of ranked categories.
\newblock \emph{Journal of Applied Meteorology}, 8(6):985--987.

\bibitem[Fan et~al.(2018)]{fan2018writingprompts}
Angela Fan, Mike Lewis, and Yann Dauphin. 2018.
\newblock Hierarchical neural story generation.
\newblock In \emph{Proceedings of ACL 2018}, pages 889--898.

\bibitem[FitzGerald et~al.(2023)]{fitzgerald2023massive}
Jack FitzGerald, Christopher Hench, Charith Peris, Scott Mackie, Kay Rottmann, Ana Sanchez, Aaron Nash, Liam Urbach, Vishesh Kakarala, Richa Singh, et~al. 2023.
\newblock {MASSIVE}: A 1{M}-example multilingual natural language understanding dataset with 51 typologically-diverse languages.
\newblock In \emph{Proceedings of ACL 2023}, pages 4277--4302.

\bibitem[Foster et~al.(2021)]{foster2021dad}
Adam Foster, Desi~R. Ivanova, Ilyas Malik, and Tom Rainforth. 2021.
\newblock Deep adaptive design: Amortizing sequential {B}ayesian experimental design.
\newblock In \emph{Proceedings of ICML 2021}, pages 3384--3395.

\bibitem[{Gemma Team}(2026)]{gemma4}
{Gemma Team, Google DeepMind}. 2026.
\newblock Gemma 4 technical report.
\newblock arXiv:2607.02770. Model used: \texttt{google/gemma-4-26B-A4B-it}.

\bibitem[Geifman and El-Yaniv(2017)]{geifman2017}
Yonatan Geifman and Ran El-Yaniv. 2017.
\newblock Selective classification for deep neural networks.
\newblock In \emph{Advances in Neural Information Processing Systems 30}, pages 4878--4887.

\bibitem[Geirhos et~al.(2020)]{geirhos2020}
Robert Geirhos, J\"orn-Henrik Jacobsen, Claudio Michaelis, Richard Zemel, Wieland Brendel, Matthias Bethge, and Felix~A. Wichmann. 2020.
\newblock Shortcut learning in deep neural networks.
\newblock \emph{Nature Machine Intelligence}, 2:665--673.

\bibitem[Gneiting and Raftery(2007)]{gneiting2007}
Tilmann Gneiting and Adrian~E. Raftery. 2007.
\newblock Strictly proper scoring rules, prediction, and estimation.
\newblock \emph{Journal of the American Statistical Association}, 102(477):359--378.

\bibitem[Guo et~al.(2017)]{guo2017}
Chuan Guo, Geoff Pleiss, Yu~Sun, and Kilian~Q. Weinberger. 2017.
\newblock On calibration of modern neural networks.
\newblock In \emph{Proceedings of ICML 2017}, pages 1321--1330.

\bibitem[Gururangan et~al.(2018)]{gururangan2018}
Suchin Gururangan, Swabha Swayamdipta, Omer Levy, Roy Schwartz, Samuel Bowman, and Noah~A. Smith. 2018.
\newblock Annotation artifacts in natural language inference data.
\newblock In \emph{Proceedings of NAACL-HLT 2018}, pages 107--112.

\bibitem[Hendrycks et~al.(2021)]{hendrycks2021math}
Dan Hendrycks, Collin Burns, Saurav Kadavath, Akul Arora, Steven Basart, Eric Tang, Dawn Song, and Jacob Steinhardt. 2021.
\newblock Measuring mathematical problem solving with the {MATH} dataset.
\newblock In \emph{NeurIPS 2021 Datasets and Benchmarks Track}.

\bibitem[Howard(1966)]{howard1966}
Ronald~A. Howard. 1966.
\newblock Information value theory.
\newblock \emph{IEEE Transactions on Systems Science and Cybernetics}, 2(1):22--26.

\bibitem[Hu et~al.(2024)]{hu2024uot}
Zhiyuan Hu, Chumin Liu, Xidong Feng, Yilun Zhao, See-Kiong Ng, Anh~Tuan Luu, Junxian He, Pang~Wei Koh, and Bryan Hooi. 2024.
\newblock Uncertainty of thoughts: Uncertainty-aware planning enhances information seeking in large language models.
\newblock In \emph{Advances in Neural Information Processing Systems 37}. arXiv:2402.03271.

\bibitem[Iyer et~al.(2017)]{iyer2017qqp}
Shankar Iyer, Nikhil Dandekar, and Korn\'el Csernai. 2017.
\newblock First {Quora} dataset release: Question pairs.
\newblock Quora blog.

\bibitem[Kahneman(2011)]{kahneman2011}
Daniel Kahneman. 2011.
\newblock \emph{Thinking, Fast and Slow}.
\newblock Farrar, Straus and Giroux.

\bibitem[Kennedy et~al.(2020)]{kennedy2020hate}
Chris~J. Kennedy, Geoff Bacon, Alexander Sahn, and Claudia von Vacano. 2020.
\newblock Constructing interval variables via faceted {R}asch measurement and multitask deep learning: A hate speech application.
\newblock arXiv:2009.10277. Dataset: \texttt{ucberkeley-dlab/measuring-hate-speech} (CC BY 4.0).

\bibitem[Keung et~al.(2020)]{keung2020marc}
Phillip Keung, Yichao Lu, Gy\"orgy Szarvas, and Noah~A. Smith. 2020.
\newblock The multilingual {A}mazon reviews corpus.
\newblock In \emph{Proceedings of EMNLP 2020}, pages 4563--4568. English part via \texttt{SetFit/amazon\_reviews\_multi\_en}; the original MARC license limits use to research.

\bibitem[Khot et~al.(2018)]{khot2018scitail}
Tushar Khot, Ashish Sabharwal, and Peter Clark. 2018.
\newblock {SciTail}: A textual entailment dataset from science question answering.
\newblock In \emph{Proceedings of AAAI 2018}.

\bibitem[Kuhn et~al.(2022)]{kuhn2022clam}
Lorenz Kuhn, Yarin Gal, and Sebastian Farquhar. 2022.
\newblock {CLAM}: Selective clarification for ambiguous questions with generative language models.
\newblock arXiv:2212.07769.

\bibitem[Kwiatkowski et~al.(2019)]{kwiatkowski2019nq}
Tom Kwiatkowski, Jennimaria Palomaki, Olivia Redfield, Michael Collins, Ankur Parikh, Chris Alberti, Danielle Epstein, Illia Polosukhin, Jacob Devlin, Kenton Lee, et~al. 2019.
\newblock Natural questions: A benchmark for question answering research.
\newblock \emph{Transactions of the Association for Computational Linguistics}, 7:452--466.

\bibitem[Kwon et~al.(2023)]{kwon2023vllm}
Woosuk Kwon, Zhuohan Li, Siyuan Zhuang, Ying Sheng, Lianmin Zheng, Cody~Hao Yu, Joseph~E. Gonzalez, Hao Zhang, and Ion Stoica. 2023.
\newblock Efficient memory management for large language model serving with {PagedAttention}.
\newblock In \emph{Proceedings of SOSP 2023}, pages 611--626.

\bibitem[Lang(1995)]{lang1995}
Ken Lang. 1995.
\newblock {NewsWeeder}: Learning to filter netnews.
\newblock In \emph{Proceedings of ICML 1995}, pages 331--339.

\bibitem[Larson et~al.(2019)]{larson2019clinc}
Stefan Larson, Anish Mahendran, Joseph~J. Peper, Christopher Clarke, Andrew Lee, Parker Hill, Jonathan~K. Kummerfeld, Kevin Leach, Michael~A. Laurenzano, Lingjia Tang, and Jason Mars. 2019.
\newblock An evaluation dataset for intent classification and out-of-scope prediction.
\newblock In \emph{Proceedings of EMNLP-IJCNLP 2019}, pages 1311--1316.

\bibitem[Li et~al.(2023)]{li2023synthetic}
Zhuoyan Li, Hangxiao Zhu, Zhuoran Lu, and Ming Yin. 2023.
\newblock Synthetic data generation with large language models for text classification: Potential and limitations.
\newblock In \emph{Proceedings of EMNLP 2023}.

\bibitem[Lin et~al.(2023)]{lin2023toxicchat}
Zi~Lin, Zihan Wang, Yongqi Tong, Yangkun Wang, Yuxin Guo, Yujia Wang, and Jingbo Shang. 2023.
\newblock {ToxicChat}: Unveiling hidden challenges of toxicity detection in real-world user-{AI} conversation.
\newblock In \emph{Findings of EMNLP 2023}.

\bibitem[Lindley(1956)]{lindley1956}
Dennis~V. Lindley. 1956.
\newblock On a measure of the information provided by an experiment.
\newblock \emph{The Annals of Mathematical Statistics}, 27(4):986--1005.

\bibitem[Liu and Nocedal(1989)]{liu1989lbfgs}
Dong~C. Liu and Jorge Nocedal. 1989.
\newblock On the limited memory {BFGS} method for large scale optimization.
\newblock \emph{Mathematical Programming}, 45:503--528.

\bibitem[{Liu}(2024)]{phishingdata}
Zefang Liu. 2024.
\newblock Phishing email dataset.
\newblock Hugging Face dataset \texttt{zefang-liu/phishing-email-dataset} (LGPL-3.0).

\bibitem[Maas et~al.(2011)]{maas2011imdb}
Andrew~L. Maas, Raymond~E. Daly, Peter~T. Pham, Dan Huang, Andrew~Y. Ng, and Christopher Potts. 2011.
\newblock Learning word vectors for sentiment analysis.
\newblock In \emph{Proceedings of ACL-HLT 2011}, pages 142--150.

\bibitem[Marone et~al.(2025)]{marone2025mmbert}
Marc Marone, Orion Weller, William Fleshman, Eugene Yang, Dawn Lawrie, and Benjamin Van~Durme. 2025.
\newblock {mmBERT}: A modern multilingual encoder with annealed language learning.
\newblock arXiv:2509.06888.

\bibitem[Metsis et~al.(2006)]{metsis2006enron}
Vangelis Metsis, Ion Androutsopoulos, and Georgios Paliouras. 2006.
\newblock Spam filtering with naive {B}ayes -- which naive {B}ayes?
\newblock In \emph{Third Conference on Email and Anti-Spam (CEAS 2006)}.

\bibitem[Mozannar and Sontag(2020)]{mozannar2020}
Hussein Mozannar and David Sontag. 2020.
\newblock Consistent estimators for learning to defer to an expert.
\newblock In \emph{Proceedings of ICML 2020}, pages 7076--7087.

\bibitem[Naeini et~al.(2015)]{naeini2015}
Mahdi Pakdaman Naeini, Gregory~F. Cooper, and Milos Hauskrecht. 2015.
\newblock Obtaining well calibrated probabilities using {B}ayesian binning.
\newblock In \emph{Proceedings of AAAI 2015}, pages 2901--2907.

\bibitem[Nie et~al.(2020)]{nie2020anli}
Yixin Nie, Adina Williams, Emily Dinan, Mohit Bansal, Jason Weston, and Douwe Kiela. 2020.
\newblock Adversarial {NLI}: A new benchmark for natural language understanding.
\newblock In \emph{Proceedings of ACL 2020}, pages 4885--4901.

\bibitem[{NVIDIA}(2025)]{nemotron2025data}
{NVIDIA}. 2025.
\newblock {Nemotron-Post-Training-Dataset-v1}.
\newblock Hugging Face dataset \texttt{nvidia/Nemotron-Post-Training-Dataset-v1}.

\bibitem[nibzard(2026)]{jevbench_nib}
nibzard. 2026.
\newblock decision-model-benchmark.
\newblock GitHub repository, \url{https://github.com/nibzard/decision-model-benchmark}. Accessed September 2026.

\bibitem[Ong et~al.(2024)]{ong2024routellm}
Isaac Ong, Amjad Almahairi, Vincent Wu, Wei-Lin Chiang, Tianhao Wu, Joseph~E. Gonzalez, M.~Waleed Kadous, and Ion Stoica. 2024.
\newblock {RouteLLM}: Learning to route {LLMs} with preference data.
\newblock arXiv:2406.18665.

\bibitem[{Qwen Team}(2026)]{qwen36}
{Qwen Team}. 2026.
\newblock {Qwen3.6-35B-A3B}.
\newblock Model card, \url{https://huggingface.co/Qwen/Qwen3.6-35B-A3B}. FP8 variant used.

\bibitem[Rao and Daum{\'e}~III(2018)]{rao2018}
Sudha Rao and Hal Daum\'e~III. 2018.
\newblock Learning to ask good questions: Ranking clarification questions using neural expected value of perfect information.
\newblock In \emph{Proceedings of ACL 2018}, pages 2737--2746.

\bibitem[Rastogi et~al.(2020)]{rastogi2020sgd}
Abhinav Rastogi, Xiaoxue Zang, Srinivas Sunkara, Raghav Gupta, and Pranav Khaitan. 2020.
\newblock Towards scalable multi-domain conversational agents: The schema-guided dialogue dataset.
\newblock In \emph{Proceedings of AAAI 2020}, pages 8689--8696.

\bibitem[Ren et~al.(2023)]{ren2023knowno}
Allen~Z. Ren, Anushri Dixit, Alexandra Bodrova, Sumeet Singh, Stephen Tu, Noah Brown, Peng Xu, Leila Takayama, Fei Xia, Jake Varley, Zhenjia Xu, Dorsa Sadigh, Andy Zeng, and Anirudha Majumdar. 2023.
\newblock Robots that ask for help: Uncertainty alignment for large language model planners.
\newblock In \emph{Conference on Robot Learning (CoRL)}. arXiv:2307.01928.

\bibitem[Saravia et~al.(2018)]{saravia2018carer}
Elvis Saravia, Hsien-Chi~Toby Liu, Yen-Hao Huang, Junlin Wu, and Yi-Shin Chen. 2018.
\newblock {CARER}: Contextualized affect representations for emotion recognition.
\newblock In \emph{Proceedings of EMNLP 2018}, pages 3687--3697.

\bibitem[Schatzmann et~al.(2007)]{schatzmann2007}
Jost Schatzmann, Blaise Thomson, Karl Weilhammer, Hui Ye, and Steve Young. 2007.
\newblock Agenda-based user simulation for bootstrapping a {POMDP} dialogue system.
\newblock In \emph{Proceedings of NAACL-HLT 2007, Companion Volume (Short Papers)}, pages 149--152.

\bibitem[Shao et~al.(2024)]{shao2024deepseekmath}
Zhihong Shao, Peiyi Wang, Qihao Zhu, Runxin Xu, Junxiao Song, Xiao Bi, Haowei Zhang, Mingchuan Zhang, Y.~K. Li, Y.~Wu, and Daya Guo. 2024.
\newblock {DeepSeekMath}: Pushing the limits of mathematical reasoning in open language models.
\newblock arXiv:2402.03300.

\bibitem[Socher et~al.(2013)]{socher2013sst}
Richard Socher, Alex Perelygin, Jean Wu, Jason Chuang, Christopher~D. Manning, Andrew Ng, and Christopher Potts. 2013.
\newblock Recursive deep models for semantic compositionality over a sentiment treebank.
\newblock In \emph{Proceedings of EMNLP 2013}, pages 1631--1642.

\bibitem[Stepanov et~al.(2026)]{stepanov2026scx}
Ihor Stepanov, Aleksandr Smechov, Mykhailo Shtopko, Dmytro Vodianytskyi, and Oleksandr Lukashov. 2026.
\newblock {SCX Router}: Streaming zero-shot model selection with a decoder-{KV} classifier and a real-world task ontology.
\newblock arXiv:2609.02292.

\bibitem[Stepanov et~al.(2025)]{stepanov2025gliclass}
Ihor Stepanov, Mykhailo Shtopko, Dmytro Vodianytskyi, Oleksandr Lukashov, Alexander Yavorskyi, and Mykyta Yaroshenko. 2025.
\newblock {GLiClass}: Generalist lightweight model for sequence classification tasks.
\newblock arXiv:2508.07662.

\bibitem[{Tobi-Bueck}(2025)]{ticketsdata}
{Tobi-Bueck}. 2025.
\newblock Customer support tickets.
\newblock Hugging Face dataset \texttt{Tobi-Bueck/customer-support-tickets} (CC BY-NC 4.0).

\bibitem[Vovk et~al.(2005)]{vovk2005}
Vladimir Vovk, Alex Gammerman, and Glenn Shafer. 2005.
\newblock \emph{Algorithmic Learning in a Random World}.
\newblock Springer.

\bibitem[Warner et~al.(2024)]{warner2024modernbert}
Benjamin Warner, Antoine Chaffin, Benjamin Clavi\'e, Orion Weller, Oskar Hallstr\"om, Said Taghadouini, Alexis Gallagher, Raja Biswas, Faisal Ladhak, Tom Aarsen, Nathan Cooper, Griffin Adams, Jeremy Howard, and Iacopo Poli. 2024.
\newblock Smarter, better, faster, longer: A modern bidirectional encoder for fast, memory efficient, and long context finetuning and inference.
\newblock arXiv:2412.13663.

\bibitem[Warstadt et~al.(2019)]{warstadt2019cola}
Alex Warstadt, Amanpreet Singh, and Samuel~R. Bowman. 2019.
\newblock Neural network acceptability judgments.
\newblock \emph{Transactions of the Association for Computational Linguistics}, 7:625--641.

\bibitem[Williams et~al.(2018)]{williams2018mnli}
Adina Williams, Nikita Nangia, and Samuel Bowman. 2018.
\newblock A broad-coverage challenge corpus for sentence understanding through inference.
\newblock In \emph{Proceedings of NAACL-HLT 2018}, pages 1112--1122.

\bibitem[Williams and Young(2007)]{williams2007pomdp}
Jason~D. Williams and Steve Young. 2007.
\newblock Partially observable {M}arkov decision processes for spoken dialog systems.
\newblock \emph{Computer Speech \& Language}, 21(2):393--422.

\bibitem[Williams(1992)]{williams1992}
Ronald~J. Williams. 1992.
\newblock Simple statistical gradient-following algorithms for connectionist reinforcement learning.
\newblock \emph{Machine Learning}, 8:229--256.

\bibitem[Yang et~al.(2025)]{qwen3}
An~Yang, Anfeng Li, Baosong Yang, Beichen Zhang, Binyuan Hui, Bo~Zheng, Bowen Yu, Chang Gao, Chengen Huang, Chenxu Lv, et~al. 2025.
\newblock Qwen3 technical report.
\newblock arXiv:2505.09388.

\bibitem[Y{\i}lmaz et~al.(2026a)]{yilmaz2026moganbert}
Furkan Y{\i}lmaz, Habibe Aleyna Ta\c{s}demir, and Muhammed Faruk G\"{o}zay. 2026a.
\newblock {MoganBert-TR}: A {T}urkish encoder foundation model trained from scratch with a {CLM}-to-{MLM} curriculum.
\newblock arXiv:2608.25768.

\bibitem[Yu et~al.(2020)]{yu2020interactive}
Lili Yu, Howard Chen, Sida~I. Wang, Tao Lei, and Yoav Artzi. 2020.
\newblock Interactive classification by asking informative questions.
\newblock In \emph{Proceedings of ACL 2020}, pages 2664--2680.

\bibitem[Zaratiana et~al.(2024)]{zaratiana2024gliner}
Urchade Zaratiana, Nadi Tomeh, Pierre Holat, and Thierry Charnois. 2024.
\newblock {GLiNER}: Generalist model for named entity recognition using bidirectional transformer.
\newblock In \emph{Proceedings of NAACL 2024}, pages 5364--5376.

\bibitem[Zhang and Choi(2023)]{zhang2023clarify}
Michael J.~Q. Zhang and Eunsol Choi. 2023.
\newblock Clarify when necessary: Resolving ambiguity through interaction with {LMs}.
\newblock arXiv:2311.09469.

\bibitem[Zhang et~al.(2015)]{zhang2015charcnn}
Xiang Zhang, Junbo Zhao, and Yann LeCun. 2015.
\newblock Character-level convolutional networks for text classification.
\newblock In \emph{Advances in Neural Information Processing Systems 28}, pages 649--657.

\bibitem[Zhang et~al.(2019)]{zhang2019paws}
Yuan Zhang, Jason Baldridge, and Luheng He. 2019.
\newblock {PAWS}: Paraphrase adversaries from word scrambling.
\newblock In \emph{Proceedings of NAACL-HLT 2019}, pages 1298--1308.

\end{thebibliography}
\end{document}